\documentclass{article}

\usepackage{amsfonts,amsmath,amssymb}
\usepackage{listings} 
\usepackage{hyperref}
\usepackage{enumerate}
\usepackage{latexsym}
\usepackage{fancyhdr}
\usepackage{ifthen}
\usepackage{tabularx}
\usepackage{dcolumn}
\usepackage{tikz}
\usepackage{caption}
\usepackage{graphicx,subfigure,float}
\usepackage{amsthm}
\usepackage{adjustbox}
\usepackage[sort&compress]{natbib}

\newcommand{\Prob}{\mathcal{P}rob}
\newcommand{\Stat}{\mathcal{S}tat}
\newcommand{\Bayes}{\mathcal{B}ayes}
\newcommand{\Meas}{\mathcal{M}eas}
\newcommand{\R}{\mathbb{R}}
\newcommand{\Tr}{\mathrm{Tr}}
\newcommand{\LV}{\mathrm{LV}}
\newcommand{\V}{\mathrm{V}}
\newcommand{\C}{\mathrm{C}}
\newcommand{\NN}{\mathrm{NN}}
\newcommand{\TS}{\mathrm{TS}}

\newcommand{\Loc}{\mathrm{Loc}}

\def\comma{}

\title{To Describe or Construct Statistical Learning Models Using the Category-theoretical Language}  
\author{Congwei Song}  
\date{\today}  

\theoremstyle{plain}

\newtheorem{fact}{Fact}[section]

\newtheorem{definition}{Definition}[section]
\newtheorem{remark}{Remark}[section]

\begin{document}  
\maketitle

%%% ----------------------------------------------------------------------
% \newenvironment{proof}{\textit{Proof}\hspace{0.5cm}}{$\Box$}

\textbf{Abstract} Statistical learning is a fascinating field that has long been the mainstream of machine learning/artificial intelligence. A large number of results have been produced which can be widely applied to real-world problems. It also leads to many research topics and also stimulates new research. This report summarizes some classical statistical learning models and well-known algorithms, especially for amateurs, and provides a category-theoretic perspective on understanding statistical learning models. The aim is to attract researchers from other fields, including basic mathematics, to participate in the research related to statistical learning.

\textbf{Keywords} Statistical Learning, Statistics, Category Theory, Variational Models, Neural Networks, Deep Learning

\section{Introduction}

People have developed plentiful machine learning models, from the simple, such as linear regression, logistic regression\cite{hastie2009}, to the complicated, such as deep neural networks and large language models\cite{goodfellow2016}. However, the models share with the similar pattern as statistic models. In this report, we will investigate them preliminarily with the help of category theory.

The main purpose of the paper is not to build a strict theory as in \cite{shiebler2021}, but to invent some notaions borrowed from category theory to describe complicated models of machine learning.

The two primary contributions of this paper are:
\begin{enumerate}[(1)]
\item The utilization of categorical language to describe the models of machine learning.
\item The introduction of a unified descriptive approach to construct complicated models, such as the language models.
\end{enumerate}

As a nontrivial example, the Transformer is constructed based on the so-called local model or localization trick.

We introduce the following abbreviations and notations.

% \textbf{Abbreviations}:
% \begin{itemize}
%   \item rv(s): random variable(s)
%   \item w.r.t.: with respect to
% \end{itemize}

\textbf{Abbreviations for classical models}:
\begin{itemize}
  \item LDA/QDA: Linear Discriminant Analysis/Quadratic Discriminant Analysis
  \item GMM: Gaussian Mixture Model
  \item PCA: Principal Component Analysis
  \item ICA: Independent Component Analysis
  \item NMF: Non-negative Matrix Factorization
  \item (p)LSA: (Probabilistic) Latent Semantic Analysis
  \item HMM: Hidden Markov Model
  \item RNN: Recurrent Neural Network
  \item LSTM: Long-Short time memory (as a classical design of RNN)
\end{itemize}

\textbf{Notations}:
\begin{itemize}
  \item $\mathcal{X}$: sample space
  \item $\mathcal{X}^*=\bigcup_t \mathcal{X}^t$: sequential sample space (Kleen closure of $\mathcal{X}$)
  \item $X$: a random variable (rv)
  \item $\{X_i\}$: a sample
  \item $X^*$: a sequence of rv with arbitary length
  \item $P(X)$: distribution of target rv $X$
  \item $P(X|\theta)$: parametric distribution
  \item $\{x_\lambda\}$: a family/sequence/set indexed by $\lambda$
  \item $\{x_{ij}\}_{ij}$: a matrix indexed by $i,j$
  \item $A := B$: $A$ is defined as $B$
  \item $f(x;y)$: A function of $x$ and $y$, while $y$ is not important in theoretical analysis and will be omitted in common cases
\end{itemize}

In the context of machine learning, we use lowercase $x$ to represent rvs.

Furthermore, the ``facts'' in this paper refer to theorems that are not completely formalized. However, they can be formalized at any time.

\section{Category theory}

We do note need the entire category theory. Here, it is merely used as a language for description machine learning.

The category theory studies objects, morphisms/arrows between them, and functors as a mapping between cats. For simplicity, we will only consider a category as a family of objects and view functors as a mapping from one family to another, without explicitly discussing arrows which is the real thrust of category theory. An object is referred to as a model, and a functor is also called a ``constructor'' in this paper.

We will not recall the definition of category theory here, and for more details about category theory, please refer to \cite{maclane1978}.

\section{Probability Models}

Let us depart from the definition of probability models. Probability theory is truly built on measure theory, where distributions play a crucial role \cite{jost2019,khoshnevisan2007,kiessler1998}.

\begin{definition}[Probability Model 1]
  A probability model is a special measure space, denoted by $(\Omega, \mathcal{A}, P)$ where the probaility measure $P$ satifise $P(\Omega)=1$, $\Omega$ is the ambient space, and $\mathcal{A}$ is a $\sigma$-algebra on $\Omega$.
\end{definition}

A measurable mapping $X$ on $\Omega$ is called $\mathcal{X}$-valued rv, if the range of $X$ is the measurable space $(\mathcal{X},\mathcal{B})$, also called the sample space of $X$.

We will not explain the measure space in details, since what we are interested in is the following special setting.

\begin{definition}[Probability Model 2]
  Assume that $(\Omega, \mathcal{A}, P)$ is any probability model. Then a probability model derived by the target rv $X:\Omega\to \mathcal{X}$ is
  \begin{equation}\label{pushback}
  (\mathcal{X},\mathcal{B}, P_X), P_X(B) := P(X\in B), B\in\mathcal{B}
  \end{equation}
  where $X\in B$ represent the event $\{\omega\in\Omega X(\omega)\in B\}$, namely the preimage $X^{-1}[B]$.
  \eqref{pushback} is written as $(\mathcal{X}, P(X))$ or just $P(X)$ for short.
\end{definition}

It is equivalent to say that the distribution of $X$ is $P$, or $X$ obeys the distribution $P$, 
denoted as $X\sim P$.

\begin{remark}
  The model \eqref{pushback} is named the \textit{pushback} of $X$ in measure theory \cite{shiebler2021}.
\end{remark}

\section{Statistics}

Statistics is the foundation of Statistical Learning/Machine Learning, whose research object is the statistical models.

\subsection{Statistical Models}

\begin{definition}[Statistical Model (non-parameterized)]\label{df:sm}
A statistical model is a family/set of probability models with the common ambient space denoted by $(\Omega, \mathcal{A},\{P_\lambda\})$. Of interest is the special case $(\mathcal{X},\{P_\lambda(X)\})$ that is a statistical model generated by the target rv $X$ whose common ambient space is the sample space of $X$.
\end{definition}

It is equivalent to write that $X\sim P_\lambda$ where $P_\lambda$ is unkown.

It is possible to parameterize a family/set $\{a_\lambda\}$ indexed by $\lambda$ through a mapping $\theta\mapsto a_\theta,\theta\in\Theta$, say that $a_\theta$ is the parameterized form of $\{a_\lambda\}$. Strictly speaking, the parameter space $\Theta$ is a subset of Eulidean space $\mathbb{R}^n$ \cite{shao2007}, but we will remove the restriction. Now we introduce the parameterized form of the statistical models in \autoref{df:sm}

\begin{definition}[Statistical Model (parameterized)]\label{df:sm-param}
  A parametric statistical model is the parameterized form of a statistical model (in \autoref{df:sm}), denoted as $(\mathcal{X},P(X|\theta),\theta\in\Theta)$ where $\Theta$ is the parameter space, and $P(X|\theta)$ is a mapping on $\Theta$ strictly. If there is no ambiguity, we denote it by the para distribution $P(X|\theta)$ without any explanation of parameter space.
\end{definition}

Although the parameterized form is used in machine learning more frequently, in theoretical analysis, nonparameteric statistical model. is commonly used in the paper. We use $P(X)$ to refer to a general statistical model where $P$ is unknown self-declaratively.

\begin{remark}
Since we did not give any restriction for parameter space, thus there is no essential difference between parameterized form and non-parameterized form of the statisical model, in this paper.
\end{remark}

\cite{mccullagh2002} provides a more strict definition of statistical models. The different is not essential. Even after this paper concludes, it won't be too late to replace the definition in this article with it.

\subsection{Sample}

A \textit{sample} is a tuple/set/family of rvs, $\{X_1,\cdots, X_N\}$, drawn from a certain \textit{population}. If the sample is independent, meaning the rvs in sample are independent, then the joint distribution $P(X_1,\cdots, X_N)$, is determined by the popluation distribution totally, otherwise we should give out the joint distribution explicitly whose marginal distribution is the population. In the latter case, we have consturcted a new statistical model on the sample space $\mathcal{X}^N$, based on the original population. To distinguish the sample of the new model from the model $M$, we denote it as $M^*$ for brevity.

A \textit{statistic} is a function on a sample, denoted by $T(X)$ where $X$ represtents a sample.

\begin{remark}
(The realization of) a sample is commonly referred to as ``data'' in the context of machine learning.
\end{remark}

\subsection{Bayesian Models}

Bayesian methods are a class of statistical techniques and principles used for probabilistic reasoning and statistical inference. The core idea of Bayesian methods is to update our beliefs about a particular parameter represented by a distribution based on the observants. Baysian Model is the statistical model using the method.

\begin{definition}[Baysian Model]
  The Baysian model is a statistical model with priori distribution of parameters, as $(M_\theta,p(\theta))$, where $M_\theta$ is a given statistical model with parameter distribution $p(x|\theta)$. Thus a Bayesian model is identified with a probability model $p(x,\theta)$.
\end{definition}

A common statistical model is a special Bayesian model with that $\theta$ obayes so-called flatten distribution. Thus any statistical model is indeed a probality model with joint disribution $p(x,\theta)$, where we have no sample for $\theta$.

It will be more flexible to introduce a unknown super-parameter $\alpha$ for the priori distribution. $p(\theta|\alpha)$.

\begin{definition}[Baysian Hierachical Model]
  The \textit{Baysian empirical model} is a Bayesian model with unknown priori, denoted as $(M_\theta,p(\theta|\alpha))$ where $\alpha$ is unknown. If the super-parameter $\alpha$ has its own distribution $p(\alpha)$ (named the super priori), then we have the \textit{Baysian hierachical model} $(M_\theta, p(\theta|\alpha), p(\alpha))$.
\end{definition}

\subsection{Stochastic process}

As mentioned above, when samples are not independent, we use the joint distribution of sample to characterize the model. It is fundamentally different from the model of population distribution. In another word, the sample forms a stochastic process.

Roughly, a stochastic process is a sequence $\{X_t\}$ of rvs $X_t,t\in\mathcal{T}$, where the time set $\mathcal{T}=[0,\infty)$ or $\{1,\cdots, T\}$ normally. The process is charactorized by the following family of distributions, 
\begin{equation}
  P(X_{k_1},\cdots, X_{k_T}),k_1,\cdots, k_T\in\mathcal{T}
\end{equation}

If the time set $\mathcal{T}=\{1,\cdots, T\}$, the sequence $\{X_t\}$ is also named the time series. We take the \textit{Markov chain} as a typical example. Its joint distribution is formulated as, 
\begin{equation}
  P(X_1,\cdots, X_T)=P(X_1)\prod_{t=1}^{T-1} P(X_{t+1}|X_{t})
\end{equation}

% It is easy to see that $P^*$ also denotes a stochastic process whose the marginial distribution of any rv $X_k$ is $P$.

\subsection{Categories of statistical models}

As is well-known, the collection of measure spaces forms a category (denoted as $\Meas$ \cite{shiebler2021}), as do the categories of probability models and statistical models. The following categories can be expressed based on the definitions above without any explanation where the arrows will not be shown as mentioned.

\begin{itemize}
  \item $\Prob$: the category of all probability models, as the sub-category of $\Meas$;
  \item $\Prob_X$: the sub-category of $\Prob$ derived by the target rv $X$;
  \item $\Prob_{Y|x}$: the sub-category of $\Prob$ derived by the target rv $Y$, with conditional distribution $P(Y|x)$ where $x$ is fixed, named the conditional variable;
  \item $\Stat$: the category of The statistical models;
  \item $\Stat_X$: the sub-category of $\Stat$ of the target rv $X$, where an object is also represented by $P(X)$ ($P$ is unknown)
  \item $\Stat_{Y|x}$: the sub-category of $\Stat$ of the target rv $Y$ with respect to the conditional variable $x$;
  \item $\Bayes$: the category of the Baysian models;
  \item $P^*$: the stochastic process or the sample distribution with marginal distribution $P$;
\end{itemize}

\begin{remark}
Here the \textit{sample distribution} refers to the joint distribution of the sample, instead of the empirical distribution.
\end{remark}
% If $x$ is not fixed, then $\Prob_{Y|x}$ is a family of probability models with the index $x$. Similar is $\Stat_{Y|x}$.

\begin{fact}\label{fc:stat-prob}
  $\Stat \simeq \Prob$, in some sense.
\end{fact}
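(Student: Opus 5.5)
The natural reading of Fact~\ref{fc:stat-prob}, given the preceding subsection on Bayesian models, is that every statistical model can be identified with a probability model and conversely. The correspondence runs through the joint distribution $p(x,\theta)$. I would therefore prove the statement by exhibiting two constructors (functors in the loose sense of this paper) $F:\Stat\to\Prob$ and $G:\Prob\to\Stat$, and checking that $G\circ F$ and $F\circ G$ are the identity up to the natural identifications. Here ``up to the natural identifications'' is exactly the content of ``in some sense.''

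\textbf{Step 1: the inclusion $G:\Prob\to\Stat$.} A probability model $(\Omega,\mathcal{A},P)$ is sent to the statistical model $(\Omega,\mathcal{A},\{P\})$, the family consisting of a single measure. Equivalently, it is the parametric model with a one-point parameter space $\Theta=\{\ast\}$, which Definition~\ref{df:sm-param} allows because no restriction is placed on $\Theta$. This map is clearly injective, so $\Prob$ embeds in $\Stat$.

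\textbf{Step 2: the map $F:\Stat\to\Prob$.} Take a statistical model and write it in parameterized form, $(\mathcal{X},P(X|\theta),\theta\in\Theta)$. By the remark after Definition~\ref{df:sm-param}, this loses no generality. Equip $\Theta$ with a $\sigma$-algebra and a prior $p(\theta)$; the default choice is the ``flatten'' distribution mentioned just after the definition of the Bayesian model. Then define $F$ of the model to be the probability model on $\mathcal{X}\times\Theta$ with joint law $p(x,\theta)=p(x|\theta)p(\theta)$. The discussion after the Bayesian model definition asserts exactly that a statistical model ``is indeed a probability model with joint distribution $p(x,\theta)$''. Step 2 formalizes that remark. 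The original family is recovered from the joint law as the conditional distributions $P(X|\theta)$, obtained by disintegration along the projection to $\Theta$.

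\textbf{Step 3: the round trips.} For $F\circ G$, a one-point $\Theta$ gives $\mathcal{X}\times\{\ast\}\cong\mathcal{X}$, so we recover the original probability model. For $G\circ F$, the statistical model $\{p(x,\theta)\}$ is related to $\{P(X|\theta)\}$ by conditioning on $\theta$. So we obtain the same family after identifying a model with its family of conditionals.

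The main obstacle is Step 2 when $\Theta$ is large. A flat prior on a non-compact or infinite-dimensional $\Theta$ is improper, so $p(x,\theta)$ need not be a probability measure. In addition, disintegration requires standard Borel hypotheses. I would first try to avoid an improper prior by choosing any fully supported proper prior, for instance a Dirac mixture over a countable dense set. Alternatively, I would restrict to standard Borel $\Theta$, so that conditionals exist and are unique $p(\theta)$-almost everywhere. Either way, this shows that the equivalence holds only up to these choices and null sets, which is precisely the qualifier ``in some sense.''
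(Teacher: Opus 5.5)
Your Step 2 is exactly the paper's argument: a statistical model is regarded as a Bayesian model with the flat prior, which is determined by the joint law $p(x,\theta)$ and hence is a probability model; your Step 1 (singleton families) and your remarks on improper priors and disintegration make explicit what the paper leaves implicit under ``in some sense.'' One slip is that, with $G$ defined as the singleton embedding, $G\circ F$ returns a one-point family on $\mathcal{X}\times\Theta$ rather than the original model, so the recovery in Step 3 is really done by disintegration (a different map from $G$), and under your Dirac-mixture prior it recovers $P(X|\theta)$ only at the atoms rather than on all of $\Theta$.
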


\begin{proof}
  $\Stat$ could be regarded as a sub-category of $\Bayes$ with the flatten priori. Obversely the Bayesian model is determined by the joint distribution. $P(x,\theta)$. As a result, we have that the two categroies, $\Stat$ and $\Prob$, are identified with each other in some sense.
\end{proof}

\begin{remark}
  The conditional distribution $P(Y|x)$ with a given $x$ is also a distribution. There is no different between $\Prob_{Y|x}$ and $\Prob_{X}$ as categories.
\end{remark}

\subsection{Estimator}

The main task of the statistcs is the estimation, in parameterized form or nonparameterized form. Formally, An estimation of a statistical model $P(X|\theta)$ is represented as mapping $\hat\theta: \mathcal{X}^N\to \Theta$.

\begin{definition}[Estimator]
  An estimator is a statistic to estimate the parameter of the model $P(X|\theta)$, denoted as $\hat\theta(X)$. A model with estimator is denoted as $(M_\theta, \hat\theta(S))$ where $S$ is a sample with size $N$.
\end{definition}

The best known estimator is the \textit{maximal likelihood estimator (MLE)}, which has been used to construct the most algorithms in machine learning.

\begin{definition}[MLE]
  Given a statistical model $P(X|\theta)$. The solution of the following optimization problem is called MLE of the parameter $\theta$:
  \begin{equation}
  \max_\theta l(\theta):=\sum_i \ln p(x_i|\theta)
  \end{equation}
  or in nonparameterized form,
  \begin{equation}
  \max_p l(p):=\sum_i \ln p(x_i) \comma
  \end{equation}
  where $\{x_i\}$ is the independent sample.
\end{definition}

In most cases, we use MLE, so the estimator has been implied by a model. Meanwhile, the word ``estimator'' and ``model'' are treated as the synonyms.

\begin{definition}[Maximum a posteriori estimation]
  The solution of the following optimization problem is called maximum a posteriori estimation (MAPE) of $\theta$.
  \begin{equation}
  \max_\theta l(\theta):=\sum_i \ln p(x_i|\theta) + N\ln p(\theta) \comma
  \end{equation}
  where $\{x_i\}$ is the independent sample with the size $N$.
\end{definition}

We can redefine the category of statistical models as a tuple of a distribution family $P(X|\theta)$ and a sample $\{x_i\}$, that makes it different from the category of probability models.

\section{Statistical Learning}

Machine learning models are divided to two main subclasses: supervised (learning) models and unsupervised (learning) models. We accept this important partition in the field of statistical learning.

\subsection{supervised learning model}

In the classical realm of machine learning, A model generally represents the relation of its input variable $x$ taking values in the space $\mathcal{X}$, and its corresponding output variable $Y$ taking values in the space $\mathcal{Y}$. Formally we give the following rough definition of machine learning models.

\begin{definition}\label{df:ml}
A machine learning model is determined by a mapping $f:\mathcal{X}\to \mathcal{Y}$, could be written as $y\sim f(x)$ simply, without an exponential distribution of $x$ and $y$.
\end{definition}

People expect that the instances $(x_i,y_i)$ chould satisfy the equations $f(x_i)=y_i, i=1\cdots,N$, that derives the following optimization problem:
\begin{equation}\label{ml-opt}
  \min_f \sum_il(y_i,f(x_i)) \comma
\end{equation}
where $l$ is a loss function defined on $\mathcal{Y}\times \mathcal{Y}$.

In statistical learning, the supervised learning model is described by a distribution, and has two basic forms: the determinant form and the generative form.

The determinant models are the probabilitic form of the machine learning models.

\begin{definition}[Supervised learning model (determinant form)]
  A supervised learning model with the input variable $x$ and the output variable $y$ is exactly depicted by the unknown conditional distribution $P(y|x)$, and denoted as $(\mathcal{X},\mathcal{Y}, P(y|x))$. It is called the determinant model. It could be seen as a family of $Stat_{y|x}$ objects indexed by the observations of $x$.
\end{definition}

\begin{remark} The formal names of $X$ and $Y$ are the independent variable/endogenous variable/explanatory variable/regression factor/predictor), and dependent variable/exogenous variable/explained variable/regression object/predictor, respectively. But we tend to prefer using the colloquial expression, ``input variable'' and ``output variable''.
\end{remark}

\begin{fact}\label{fc:dm}
A determinant model based on the sample $\{(x_i,y_i)\}$, is totally identified with a statistical model: 
\begin{equation}\label{sample-sm}
  (\mathcal{Y}^N, P(Y|X)=\prod_iP(y_i|x_i))
\end{equation} 
where the sample $X=\{x_i\}$ is fixed, named the \textit{design variable} (\textit{design matrix} , if it forms a matrix) and $Y=\{y_i\}$ is a $\mathcal{Y}^N$-valued sample point of the model \eqref{sample-sm}.
\end{fact}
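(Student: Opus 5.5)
The plan is to build a map in both directions between determinant models on the fixed design and statistical models on $\mathcal{Y}^N$, and check that the two maps are inverse to each other. Fact \ref{fc:dm} is a statement about identifying objects, so this is the natural route.

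\emph{Forward map.} Start from a determinant model $(\mathcal{X},\mathcal{Y},P(y|x))$. By its definition, this is a family of $\Stat_{y|x}$ objects indexed by the values of $x$. Fix the design $X=\{x_i\}_{i=1}^N$ and evaluate the family at each $x_i$. This gives $N$ statistical models $P(y_i|x_i)$, each on $\mathcal{Y}$. Under the standing assumption that the sample is independent (as in the discussion of samples and in the definition of MLE), the joint law of $Y=(y_1,\dots,y_N)$ given $X$ is the product measure $\prod_i P(y_i|x_i)$ on $\mathcal{Y}^N$. As $P$ ranges over the unknown family, so does this product, so we obtain an object of $\Stat_{Y|X}$. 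By the Remark after Fact \ref{fc:stat-prob}, conditioning on a fixed value does not change the categorical nature of the model, so this object can be viewed as an object of $\Stat_{Y}$ on $\mathcal{Y}^N$. That object is exactly \eqref{sample-sm}.

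\emph{Backward map.} Start from a statistical model on $\mathcal{Y}^N$ of the product form \eqref{sample-sm}. Taking the $i$-th marginal recovers $P(y_i|x_i)$. This defines the conditional distribution on the design points $\{x_i\}$, which is all the data can ever determine.

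\emph{Checking the identification.} First, marginalizing a product measure returns its factors, so the backward map undoes the forward map. Second, the forward map undoes the backward map by construction. As a final confirmation, check that estimation agrees on the two sides. The MLE objective of the determinant model, $\sum_i \ln p(y_i|x_i)$, is literally the log-likelihood of the single observation $Y$ under \eqref{sample-sm}. So the estimator $\hat\theta$ is the same on both sides, and in the paper's convention estimator and model are synonyms.

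The main obstacle is what ``totally identified'' can honestly mean. A determinant model specifies $P(y|x)$ for \emph{all} $x\in\mathcal{X}$, while \eqref{sample-sm} only sees the design points. I would handle this by stating the identification at the level the paper intends: two determinant models are identified when they agree on the design points, since no statistic of the sample can distinguish them. Equivalently, the determinant model is restricted to $\{x_i\}$ before comparing. A second caveat to state explicitly is that the product form requires conditional independence of the $y_i$ given $X$; without it, one would instead get a general $P^*$-type model on $\mathcal{Y}^N$, as in the stochastic process subsection.
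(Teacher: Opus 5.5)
Your proposal is correct and takes essentially the paper's route. The paper gives no formal proof of Fact \ref{fc:dm}; it supports it only by your forward map (reading the determinant model as a family of $\Stat_{y|x}$ objects indexed by the observed $x_i$ and taking the product over the independent sample) and by your likelihood check (the conditional likelihood $\sum_i \ln p(y_i|x_i)$ is ``not fundamentally different'' from the MLE of \eqref{sample-sm}), while your backward map and your two caveats --- that the identification holds only after restricting to the design points, and that the product form requires the $y_i$ to be independent given $X$ --- add precision the paper leaves implicit rather than a different argument.
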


We shall maximize the conditional liklihood $\max_p l(p)=\ln p(y_i|x_i)$, named the \textit{conditional maximial likelihood estimation (CMLE)} that is not fundamentally different from MLE.

\autoref{fc:dm} is the reason why the author claims that the statistical learning (model) is identified with the conditionalized statistics (model).

The definition of generative form is more simple in the abstract view. 
\textit{supervised learning model (generative form)} 
The generative model is denoted as $(\mathcal{X},\mathcal{Y}, P(x, y))$ that is not different from the statistical model $(\mathcal{X}\times \mathcal{Y}, P(x, y))$.

\textit{supervised learning model (Bayesian form)} 
The Bayesian form of supervied learning model is a special but common case of generateive model, denoted as $(\mathcal{X},\mathcal{Y}, P(x|y), P(y))$ where $P(x|y)$ and $P(y)$ is the generative distribution and priori distribution of the model respectively. Here the key point is that the two distribution are designed seperatively.

\subsection{Unsupervised learning models}

\begin{definition}
In general, unsupervised learning models only have generative form:
$(\mathcal{X}, \mathcal{Z}, P(x, z))$ where $z$ is unobservable, that is, no sample of $z$ is observed.
\end{definition}

From the view of population, the unsupervised learning model is not distingusied with the generative supervised learning model, and from the view of sample, it is the same with the statistical model $(\mathcal{X}, P(x))$, where $P(x)$ is the marginal distribution of $P(x,z)$, but it always has the prediction task with $P(z|x)$. The MLE of $P(X)$ is the \textit{maximal marginal likelihood estimation (MMLE)} of the unsupervised learning model. As you see, the determinant model could not be solved by MMLE.

An unobservale rv is also called the \textit{latent variable}. A model with latent variables is called the \textit{latent variable model}. Hence the unsupervised learning models are the latent variable models.

\subsection{Bayesian statistical learning models}

Applying the Bayesian method into statistical learning, we get the Bayesian statistical learning models (also called Bayesian models). Let $M_\theta$ is the unsupervised learning model with the distribution $P(x,z|\theta)$, then the corresponding Bayesian model is the tuple $(M_\theta, P(\theta))$. As we know, It is characterized by the joint distribution $P(x,z,\theta)$.

Like \autoref{fc:stat-prob}, we have the following fact.
\begin{fact}\label{fc:stat-prob-sl}
The statistical learning models, statistical models and probability models are equivalent to each other in some sense.
\end{fact}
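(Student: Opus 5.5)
The plan is to reduce every kind of statistical learning model to a single joint distribution, and then invoke \autoref{fc:stat-prob} to close the loop between statistical and probability models. Concretely, I will define a ``constructor'' from each class of statistical learning models to $\Prob$ and a constructor back. I will then argue that, up to the identification ``in some sense'' already accepted in \autoref{fc:stat-prob}, both composites are the identity.

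The first step handles the determinant form $(\mathcal{X},\mathcal{Y},P(y|x))$. By \autoref{fc:dm}, once the design variable $X=\{x_i\}$ is fixed, it is the statistical model $(\mathcal{Y}^N,\prod_i P(y_i|x_i))$, an object of $\Stat$. Pointwise in $x$, it is also a family of $\Stat_{Y|x}$ objects, and by the remark after \autoref{fc:stat-prob} these behave as objects of $\Stat_X$.

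The second step handles the generative and Bayesian supervised forms. These are by definition the statistical model $(\mathcal{X}\times\mathcal{Y},P(x,y))$. In the Bayesian form, the pair $(P(x|y),P(y))$ determines $P(x,y)=P(x|y)P(y)$, and conversely $P(x,y)$ determines the pair by conditioning and marginalizing.

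The third step handles unsupervised and latent variable models $(\mathcal{X},\mathcal{Z},P(x,z))$. At the population level they are statistical models on $\mathcal{X}\times\mathcal{Z}$. Viewed through the sample, they are the model $(\mathcal{X},P(x))$ of marginals fitted by MMLE. The Bayesian versions $(M_\theta,P(\theta))$ are the probability model $P(x,z,\theta)$, exactly as in the definition of Bayesian models. Treating $\theta$ with a flat prior as in the proof of \autoref{fc:stat-prob} then puts every object, whether statistical model or statistical learning model, into $\Prob$. Conversely, every probability model $P(X)$ is trivially a statistical learning model: it is a generative model with trivial $\mathcal{Y}$, or a statistical model with singleton $\Theta$, so the constructor back exists.

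The main obstacle is making ``in some sense'' precise. The passages from a model to a joint distribution lose information: the determinant form does not specify $P(x)$, and the unsupervised form is seen only through its marginal $P(x)$. The map is therefore not literally bijective. To handle this, I would fix the equivalence as the one induced by the conditioning variable. Determinant models are compared as families indexed by $x$ (that is, in $\Prob_{Y|x}$), and latent variable models are compared via their population joint $P(x,z)$ rather than the sample marginal. Within those conventions, the correspondences above are mutually inverse, which gives the claimed equivalence.
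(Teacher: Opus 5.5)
Your proposal follows the paper's argument: everything is pushed to a single joint distribution through the Bayesian/flat-prior identification behind \autoref{fc:stat-prob}. The paper does this in one stroke, viewing every statistical learning model as a Bayesian model $P(x,z,\theta)$ and merging the two unobservables $z$ and $\theta$ into one latent variable so that every model has the form $P(x,z)$, whereas you go case by case (using \autoref{fc:dm} for the determinant form, which the paper leaves implicit) and add the converse embedding. One caution: your closing claim that the correspondences are mutually inverse holds only after forgetting which coordinates are conditioned on or unobservable. For instance, a generative $P(x,y)$ and a latent-variable $P(x,z)$ with the same joint have the same image, as the paper itself notes, and that is exactly what both you and the paper leave to the phrase ``in some sense''.
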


We consider any statistical learning model as a Bayesian model: $P(x,z,\theta)$ where the latent variable $Z$ and the parameter $\theta$ are both unobservable variables, could be integrated to one variable. It is conluded that, any statistical learning model has the following form,
\begin{equation}
  P(x,z)
\end{equation} 
where $z$ is unobservable.

Now, we have eliminated the boundaries of hidden variables and parameters. It implies that we can estimate the parameter $\theta$ and predict the latent variable $Z$ simultaneously, by solving the following optimization problem, 
\begin{equation}\label{momle}
  \max_{\theta,\{z_i\}}\sum_i\ln p(x_i,z_i,\theta)
\end{equation}
named the \textit{mode maximal likelihood estimation (MoMLE)}, instead of the notorious EM algorithm, where $X$ is the sample. In fact, that is what the classical matrix factorization algorithms do, such as PCA and MNF. Moreover, MoMLE could solve the determinant unsupervised learning models $P(z|x)$, by 
\begin{equation*}
\max_{\theta,\{z_i\}}\sum_i\ln p(z_i|x_i,\theta)
\end{equation*}

\subsection{Learner}

A learner is defined as an estimator for a statistical learning model: 
\begin{equation*}
  (M_\theta, \hat{\theta}(D))
\end{equation*} 
where $M_\theta$ is a statistical model, $D$ is the sample. By the same reason about the estimator, the learner is also the synonym of the ambient statistical model.

It is useful to write the the learner explicitly in some cases. 
For instance, One can define a latent variable model as a learner $(P(x,z), \hat{\theta}(X))$ for unsupervised learning; $(P(X,Y), \hat{\theta}(X,Y))$ for supervised learning.

\subsection{predictor and generator}

In real-world situations, people are most concerned with the task of predicting the output $y$, given the input $x$. A predictor is used to do such task, namely computing $\arg\max_y P(y|x)$ or $E(y|x)$.

A generator of statistical learning model is simulation of the distribution $P(x,y)$ fundamentally.

A full structure of statistical learning model is considered to include the predictor, even the generator. As most cases, models already contain them in an implicit manner.

\subsection{Classic Models in Statistical Learning}

Following lists the classical models in statistical learning, supplemented by common examples. Each of them form a category that will not be explained in details.

\begin{itemize}
  \item Supervised Model:
  \begin{itemize}
  \item Regression: $P(y|x)$, linear regression, ridge/LASSO regression
  \item Classification:
    \begin{itemize}
    \item Discriminative form: $P(y|x)$, logistic regression
    \item Generative form: $P(x, y)$, LDA/QDA/Naive Bayesian classifier
    \end{itemize}
  \end{itemize}
  \item Unsupervised Model/Latent Variable Models:
  \begin{itemize}
    \item Clustering: $P(x,z)$, where $Z$ is unobservable (hereafter), K-means/GMM
    \item Dimension reduction: $P(x,z)$, such as PCA/ICA/MNF/(p)LSA
    \item Time Series models: $P(X_{1:T}, Z_{1:T})$ where $(X_t,Z_t)\sim P$ or $P(X^*,Z^*)$ for arbitray length, such as HMM.
  \end{itemize}
  \item Semi-supervied Model: $P(X^{(0)}, Z)P(X^{(1)}, Y)$ (also as a latent variable model)
\end{itemize}

There must be series models under supervised settings, and Naive Bayesian classifiers also have unsupervised versions. However, such models are not as widely recognized or renowned in the study of machine learning.

\subsection{The beginners' Cube}

Let us incorporate the classical models as categories within the diagram in \autoref{fig:cube}, which I propose to name ``the beginners' cube''. This nomenclature is chosen due to its resemblance to a cube and the importance of beginners initially familiarizing themselves with the models contained in it.

\begin{figure}[h]
  \centering
  \includegraphics[scale=0.6]{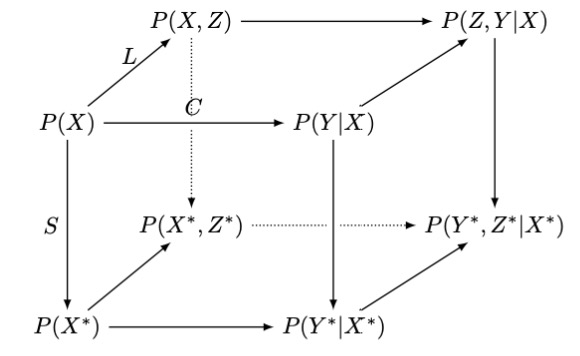}
  \caption{The beginner's cube}
  \label{fig:cube}
\end{figure}

The vertices in the diagram are categories of the models, but labeled by the generic form of their constituent objects. The arrows symbolize functors connecting any two given categories, with parallel arrows considered as equivalents. 

Consequently, we assign specific names to the arrows from $P(X)$ to $P(Y|X)$ or $P(X,Y)$, $P(x,z)$ and $P(X^*)$ respectively:
\begin{enumerate}[(1)]
  \item Conditionalizing (w.r.t. $x$), denoted by $\C_x$;
  \item Adding hidden/latent variable $z$, denoted by $\LV_z$;
  \item Serialization, denoted by $S$, or $\TS_z$ to emphasize that only the $z_t$ form a Markov chain;
\end{enumerate}

Hence we have $P(x,z)=L_z(P(x))$ for general latent variable models. Specially, if the hidden variable takes finite values in the latent varable model $P(x,z)$, then we replace $L_z$ with $Mix$, say, mixing distributions to construct a mixed model.

We have following simple facts:
\begin{enumerate}[(1)]
  \item $GMM = Mix(N(\mu,\Sigma))$
  \item $HMM = \TS_z P(x,z) = \TS_z \LV_z(x,z)$ and Gaussian-$HMM = \TS_z GMM$.
\end{enumerate}

\begin{remark}
  To enhance readability and reduce redundancy, we routinely omit subscripts in the constructors when there is no possibility of confusion.
\end{remark}

\subsection{Probabilitic Graphic Models}

Another way to describe the statistical (learning) model is the \textit{probabilitic graphic models (PGMs)}. There are two types of PGMs: the \textit{Bayesian network} and the \textit{Markov network }(also named the random field).

% A PGM describes a model in the graph-theoric way. We will not have to face with a mess of formula.

We will not recall the whole theory of PGMs here. instead we give an example of Bayesian network for HMM with joint distribution: $p(z_1)\prod_tp(x_t|z_t)\prod_t p(x_{t+1}|z_t)$.

\begin{figure}[h]
  \centering
  \includegraphics[scale=0.8]{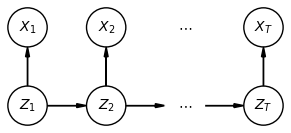}
  \caption{HMM Diagram}
  \label{fig:hmm}
\end{figure}

\begin{fact} $\Stat$ is a sub-category of the graph category.
\end{fact}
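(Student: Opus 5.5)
I will read the fact in the same informal spirit as \autoref{fc:stat-prob} and \autoref{fc:stat-prob-sl}. Objects of the ``graph category'' are taken to be PGMs: pairs $(G,P)$ with $G$ a directed acyclic graph on a vertex set $V$, one rv $X_v$ per vertex, and $P$ a family of joint distributions that factorize along $G$, i.e. $P(X_V)=\prod_{v}P(X_v\mid X_{\mathrm{pa}(v)})$. Arrows are graph homomorphisms compatible with the factorizations. The plan is to exhibit a fully faithful (in the loose sense of the paper, an injective-on-objects) functor $\Stat\to$ graph category, and then identify $\Stat$ with its image.

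\textbf{Step 1: reduce to the joint-distribution form.} Using \autoref{fc:stat-prob} and \autoref{fc:stat-prob-sl}, every statistical (learning) model is regarded as a Bayesian model, hence as a family $P(x,z)$ of joint distributions. Here the parameter $\theta$ and the latent variables are merged into the unobservable $z$. After this step we only need to embed probability models with a designated set of observed coordinates.

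\textbf{Step 2: the functor on objects.} Write the variables as $(x_1,\dots,x_n,z_1,\dots,z_m)$ and fix an ordering. The chain rule $P(w_1,\dots,w_k)=\prod_j P(w_j\mid w_{<j})$ gives a factorization along the complete DAG in that ordering, so every model admits at least this trivial graph. To make the assignment canonical, I would take the minimal DAG (the $d$-separation-minimal I-map) consistent with the chosen ordering. Observed vertices are shaded and latent ones unshaded, as in \autoref{fig:hmm}. For the constructors this recovers the familiar pictures:
\begin{itemize}
  \item $\LV_z$ adds an unshaded parent $z\to x$;
  \item $\C_x$ makes $x$ a fixed (non-random) root;
  \item $\TS_z$ replicates the graph along $t$ with edges $z_t\to z_{t+1}$.
\end{itemize}

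\textbf{Step 3: functoriality and faithfulness.} This is the main obstacle, since the paper never specifies arrows in $\Stat$. I would take arrows to be the constructors and their compositions (marginalization, conditionalizing, adding latents), and check that each maps to a graph operation: deleting a vertex, fixing a vertex, adding a vertex with edges. Injectivity on objects holds because the pair (graph, conditional factors) determines $P$ back by the product formula.

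The genuine gap is canonicity of the graph. Minimal I-maps depend on the variable ordering, and Markov equivalence classes mean that different DAGs carry the same model. I would first try to fix an ordering convention (latents before observed, parameters first). If that fails, I would instead take the objects of the graph category to be Markov-equivalence classes (essential graphs), so that the functor becomes well defined and ``sub-category'' is meant up to this equivalence, in keeping with ``in some sense''.
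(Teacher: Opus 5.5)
The paper gives no proof of this Fact. It is simply asserted after the HMM Bayesian network of \autoref{fig:hmm}, the implicit reasoning being that every model in $\Stat$ is specified by a factorization and can therefore be drawn as a PGM. Your plan formalizes that idea, but two of its steps fail as written.

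First, the assignment ``model $\mapsto$ minimal I-map'' is not functorial for the arrows you choose in Step 3. Marginalizing out a latent variable is not deleting its vertex. Take a two-coordinate mixture $z\to x_1$, $z\to x_2$ (e.g.\ $\mathrm{Mix}$ of diagonal Gaussians). The marginal $\int p(z)p(x_1\mid z)p(x_2\mid z)\,dz$ generically makes $x_1$ and $x_2$ dependent, so its minimal I-map has an edge between them. Deleting $z$ instead leaves the empty graph, which asserts $x_1\perp x_2$. In general the graph of a marginal comes from latent projection, which adds edges and can leave the class of DAGs. Similarly, conditioning on a common child $w$ of $x$ and $y$ creates explaining-away dependence, so turning $w$ into a fixed root forces you to reverse and add edges. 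Moreover, vertex deletion and edge removal or reversal are not graph homomorphisms $G\to G'$; the natural map, where there is one, is an inclusion in the other direction. So these operations are not even arrows of the target category you defined. A repair consistent with the paper's view of $\Stat$ as $P(x,z)$ with $z$ unobservable, and with your own shading convention, is never to delete vertices. Keep latent vertices in the graph, send marginalization to un-shading a vertex and conditioning a joint model to shading it, and read all (in)dependences off the unchanged graph by $d$-separation.

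Second, your fallback for canonicity does not work. Minimal I-maps for different orderings are in general not Markov equivalent. For a collider $x\to w\leftarrow y$ with $x\perp y$, the ordering $(x,y,w)$ returns the collider, whereas $(w,x,y)$ returns the complete DAG, which has a different skeleton. Only a perfect map is unique up to Markov equivalence, and one exists only for faithful distributions. Without positivity, even a fixed ordering does not determine the minimal parent sets. The paper does rely on deterministic links such as $Y=f(X)$ in $\NN$ and in the reparametrized VAE. For instance, if $y=x$ and $w$ depends on $x$, both $\{x\}$ and $\{y\}$ screen $w$ off. The simplest way out is to abandon canonicity. The paper's models come with their defining factorization (the HMM, $\TS_z\LV_z$, and so on), so let an object of $\Stat$ carry that factorization, or fix one global ordering and accept non-minimal graphs. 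The functor then just reads off the graph, injectivity on objects follows from the product formula as you say, and your description of $\LV_z$, $\C_x$ and $\TS_z$ goes through unchanged.
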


\section{Methods}

In this section, we shall exibit the general methods to enhance the basic models. They are treated as the functors/constructors of the category $\Stat$.

\subsection{Traditional methods}

\begin{definition}[Variational model]
  A \textit{variational distribution} $Q$ is always an apprximation of a given distribution $P$. A variational model of the unsupervised model $P(x,z)$, is represented by a tuple of the model distribution $P(x,z)$ and the variational distribution $Q(z|x)$ of the conditional distribution $P(z|x)$, $(P(x,z), Q(z|x))$. As well known, the estimator of the model is the maximal of the variational lower boundness,
  \begin{align}
   F(p,q;\{x_i\}):= & \sum_i F(p,q;x_i) \nonumber\\
   = & \sum_i \int_z q(z|x_i)\ln \frac{p(x_i,z)}{q(z|x_i)} dz
  \end{align}
\end{definition}

\begin{remark}
  An autoencoder is a tuple of an encoder $\phi:\mathcal{X}\to \mathcal{Z}$ and a decoder $\psi:\mathcal{Z}\to \mathcal{X}$, learning from the data where $\mathcal{X}$ and $\mathcal{Z}$ are the sample space and the encoding/latent space respectively. When the an encoder and the decoder are stochastic, represented by the conditional distributions $P(z|x)$ and $P(x|z)$ of the latent variable model $P(x,z)$, it is called the probabilistic/stochastic autoencoder. Now if the encoder $P(z|x)$ is replaced with the variational distribution $Q(z|x)$, then it is named the \textit{variational autoencoder (VAE)}\cite{kingma2013}.
\end{remark}

Following are the traditional methods, as the constructors, mapping the model $P(X,Y)$ or $P(x,z)$ to

\begin{table}[htbp]
\centering
\caption{Summary of the methods and their descriptions. ``-'' means that the author has not designed the symbol.}
\begin{adjustbox}{max width=\textwidth}
\begin{tabular}{|l|l|p{0.5\linewidth}|}
\hline
\textbf{Technique} & \textbf{Symbol of Constructor} & \textbf{Description} \\
\hline
Kernel trick & K & $P(\phi(X),Y)$ \\
\hline
Localization (refer to \autoref{sec:local}) & Loc & $K(x_*,X)\ln P(X)$ where $x_*$ is the target point \\
\hline
Hierarchical model & H & $P(x,z_1,\cdots, z_n,Y)$ where $X\to z_1\to \cdots\to z_n\to Y$, forms Markov chain usually and $z_1, \cdots, z_n$ are hidden \\
\hline
Variational trick & V & $(P(x,z), Q(z|x))$ where $Q(z|x)$ is the variational distribution \\
\hline
Time Series & TS & $z(n)$, where $z(0)=X, z(T)\sim N(0,1)$ \\
\hline
\end{tabular}
\end{adjustbox}
\label{tab:methods}
\end{table} 

Of course, adding conditional variational and Bayes method (adding priori) is one of such methods, have been introduced above.

\subsection{Neural models}

The neural models are defined as the models equipped with artificial neural networks. Usually, they are obtained through replacing the linear structures by the neural networks.

A neural method provides an implementation of a conditional distribtion, instead of changing the basic structure of the model. Denote $\NN(M)$ for any possible application of neural networks to model $M$. The following arrow is used to describe how the neural networks are assembled into a statistical model: 
\begin{equation}
 \NN: P(y|x)\to Y=f(X)
\end{equation} 
where $f$ represents any neural network, and $P(y|x)$ is any conditional distribution of two rvs in a model.

\begin{itemize}
  \item Feedforward Neural Models: the hierachical structure of linear  neural network layers
  \item RNN/LSTM: as a conditional HMM implemented by the neural network, that could be denoted as $\NN(\C(HMM))$
  \item Neural Autoencoder: NLPCA \cite{kramer1991}, integrating neural network into the autoencoder
  \item Probabilistic/Stochastic Neural Autoencoder: Kingma-Welling's Variational Autoencoder\cite{kingma2013}
  \item Stochastic neural network(SNN): neural network with Dropout(the stochastic perturbation affacts the weights of the layers, or the outputs of the layers)
  \item Hierarchical VAE: Diffusion Model/Consistency Model
\end{itemize}

Normalization Flow is a method to simulate a distribution directly, by the following fact.
\begin{fact}
For any distribution $P$, we can find a function $f$, that satisfies
\begin{equation}
  x\sim P \iff f(z), z\sim Q
\end{equation} 
where $Q$ is an \textit{easy-to-sample-from} distribution such as standard Gaussian distribution, uniform distribution.
\end{fact}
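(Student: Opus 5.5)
The plan is to read the statement as the classical \emph{transport} (or \emph{randomization}) result: the claim $x\sim P \iff x=f(z),\ z\sim Q$ means that the pushforward of $Q$ under $f$ is $P$. In the language of Probability Model 2 this is exactly $P = Q\circ f^{-1}$, i.e.\ $P(X)$ is the model derived by the target rv $X=f(Z)$ from the probability model $(\mathcal{Z},Q)$. I will work on a standard Borel sample space $\mathcal{X}$, which covers every model in the paper. I take $Q$ either uniform on $(0,1)$ or the standard Gaussian on $\R$; the two choices are interchangeable via the Gaussian c.d.f. $\Phi$, since $\Phi(Z)$ is uniform when $Z\sim N(0,1)$. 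This remark reduces the whole proof to the case $Q=U(0,1)$.

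\textbf{Step 1: the scalar case.} For $\mathcal{X}=\R$, let $F$ be the c.d.f. of $P$ and define the quantile function $f(u)=F^{-1}(u):=\inf\{x: F(x)\ge u\}$. The key property, which follows from the right-continuity of $F$, is the equivalence $f(u)\le x \iff u\le F(x)$. Hence
\[
Q(f(U)\le x)=Q(U\le F(x))=F(x),
\]
so $f(U)\sim P$. Since c.d.f.s determine distributions, this gives the direction $\Leftarrow$. The direction $\Rightarrow$ is trivial once $f$ is fixed, because both sides then describe the same law.

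\textbf{Step 2: the multivariate case} $\mathcal{X}=\R^n$. I will use the Knothe--Rosenblatt construction. Take $Z=(U_1,\dots,U_n)$ with i.i.d.\ uniform coordinates, which is uniform on $(0,1)^n$ (or Gaussian after applying $\Phi$ coordinatewise). Then set
\[
x_1=F_1^{-1}(U_1),\qquad x_k=F_{k\mid 1:k-1}^{-1}(U_k\mid x_1,\dots,x_{k-1}),
\]
where $F_{k\mid 1:k-1}$ is a regular conditional c.d.f., which exists because $\R^n$ is Polish. An induction on $k$, using Step 1 conditionally together with the independence of $U_k$ from $(U_1,\dots,U_{k-1})$, shows that the joint law of $(x_1,\dots,x_k)$ is the $k$-dimensional marginal of $P$. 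The case $k=n$ gives the result. Finally, any standard Borel $\mathcal{X}$ is Borel-isomorphic to a Borel subset of $\R$, so composing with that isomorphism extends Step 1 to the general case.

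\textbf{Main obstacle.} The hardest part is the measurability of $f$ in Step 2. One needs the map $(u,x_{1:k-1})\mapsto F^{-1}_{k\mid 1:k-1}(u\mid x_{1:k-1})$ to be jointly Borel. I would handle this by choosing a version of the conditional c.d.f. that is Borel in $x_{1:k-1}$ and right-continuous in its argument, and then writing the generalized inverse as a countable infimum over rationals. A secondary point is worth noting. If the intended reading requires $f$ to be invertible and smooth, as in actual normalizing flows, the statement is false for arbitrary $P$: a discrete $P$ cannot be the image of a Gaussian under a diffeomorphism. I would therefore state the fact with $f$ merely measurable and remark that bijectivity holds only under extra hypotheses, for example when $P$ has a positive density, in which case the Knothe--Rosenblatt map is a bijection.
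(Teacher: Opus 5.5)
The paper gives no proof of this Fact. It is asserted informally as motivation for normalizing flows, followed only by the remark that in practice $f$ is a neural network. Your argument therefore supplies a justification the paper omits, and it is correct under your reading $f_*Q=P$.

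The equivalence $f(u)\le x\iff u\le F(x)$ settles the case $\mathcal{X}=\R$. The Borel-isomorphism reduction then settles every standard Borel space. One small detail there: you must patch $\iota^{-1}\circ g$ on the null set where the one-dimensional quantile map $g$ leaves $\iota(\mathcal{X})$.

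Since $\R^n$ is itself standard Borel, your Step 2 is logically redundant. What the Knothe--Rosenblatt construction buys is a triangular map from a $Q$ of the same dimension, which is the form autoregressive flows actually use.

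Two remarks sharpen your interpretation.

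\emph{The direction $\Rightarrow$.} It is trivial only as equality in law. Suppose one wants, for a given $x\sim P$, some $z\sim Q$ with $x=f(z)$ almost surely. Then one needs an auxiliary independent uniform $V$, possibly on an enlarged probability space. In dimension one the choice is $z=F(x-)+V\,(F(x)-F(x-))$. This extra randomization can be avoided when $P$ is atomless, where $z=F(x)$ suffices.

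\emph{Continuity of $f$.} Your caveat about smooth invertible $f$ extends to every continuous $f$, in particular every neural network. If $Q$ has full support on a connected space $\mathcal{Z}$ (Gaussian on $\R^d$, uniform on $(0,1)^d$), then the support of $f_*Q$ is the closure of the connected set $f(\mathcal{Z})$. Hence no continuous $f$ yields, say, a Bernoulli law. For general $P$, the paper's neural-network reading can therefore deliver at best approximation in law. The exact statement is the merely measurable one you prove.
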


In practice, we always let $f$ be a neural network.

\section{To create advanced models}

The main role of the category-theoretical tool is to create or describe statistical learning models by applying functors. We give some examples as follows.

\subsection{Take VAE as an example}

The VAE proposed by \cite{kingma2013} is a special model of stochastic autoencoder implemented by neural networks.

Following is the path to produce a general VAE. 
\begin{align}
  P(x) & \to P(x,z)\nonumber\\
  & \to (P(x,z),Q(z|x =x)\sim N(g(x),h(x)))\nonumber\\
  & \to (P(x,z),Q(z|x =x) = g(x)+\xi h(x)),\xi\sim N(0,1)
\end{align}

Write it in the style of the composition of functors (informally) 
\begin{equation}
VAE(f,g,h) = \mathrm{Rep}\circ \mathrm{V} \circ\mathrm{LV}(P(X)) \comma
\end{equation}
regarding functions $f,g,h$ as parameters.

A VAE could be implemented by the following SNN (with a regularizing term): 
\begin{equation}
  y \sim f(g(x)+h(x)\xi) \comma
\end{equation} 
trained through self-supervised manner with data $\{(x_i,x_i)\}$, where $f,g,h$ are all neural networks, $\xi\sim N(0,1)$ is the perturbation variable of the hidden layer $g(x)$.

The network structure for VAE can be referred to as a \textit{stochastic neural network} due to its hidden layer containing stochastic perturbations. This implies an interesting fact: stochastic perturbations in the hidden layer of a neural network can result in better learning outcomes.

\subsection{Take RNN as an example}\label{sec:rnn}

Essentially, an RNN is a special type of conditional HMM, which is constructed as, 
\begin{equation*}
  P(y)\to \cdots \to P(y^*,z^*|x^*)
\end{equation*} 
Then it will be implimented by the neural networks: 
\begin{equation*}
  \to y_t\sim Net(x_t,z_{t-1}),z_{t}\sim Net(x_t,z_{t-1}),t=1,2,\cdots
\end{equation*}

Finally, an RNN could be formulated as, 
\begin{equation}\label{rnn}
  RNN = \mathrm{NN \circ TS_z\circ C\circ LV}(P)
\end{equation}

In the same manner, it is easy to create time series version of VAE, that is $\TS_z(\V(\C_x(P)))$.

RNNs struggle with long-term dependencies due to the vanishing gradient problem, where gradients become smaller and smaller as they propagate through the network. LSTM addresses this issue by adding memory cells to the RNN architecture. In LSTM there is tow types of hidden variables, but combining these two variables, we will find it never change the basic structure of RNN.

% ### autoencoder
% $(P(x,z), Q(z|x))     o (P(X^*,Z^*), Q(Z^*|X^*))$
% 
% Such AE encodes a whole sequence instead the elements in the seq independetly, namely the encoding result of an element depends on its position, and hence we say that it is dynamic. Anyway it is a common/static AE on $\mathcal{X}^*$.

\subsection{To understand the Large Language Models}

In this section, we will construct Large Language Models (LLMs) by the category-theoretic languange.

Recall the brief history of the development of LLMs through the following three models:
\begin{itemize}
  \item RNN/LSTM model of languages \cite{belinkov2017}
  \item BiRNN/BiLSTM (Bidirection LSTM) / ELMo (Embeddings from Language Models)\cite{peters2018}
  \item BERT\cite{devlin2019}
\end{itemize}

The RNN model for natural language is no different from the ordinary RNN (see \autoref{sec:rnn}), except that the input and output are both word embeddings. Now let us understand the other two structures.

\subsubsection{ELMo}

ELMo (Embeddings from Language Models) is a deep learning-based language model that learns contextualized word representations by jointly training a bidirectional language model.

\begin{figure}[h]
  \centering
  \includegraphics[scale=0.7]{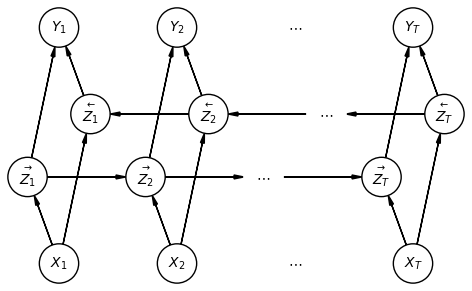}
  \caption{The structure of ELMo. A more detailed figure can be found in \cite{devlin2019}}
  \label{fig:elmo}
\end{figure}

\begin{remark}
\autoref{fig:elmo} is not a typical GPM, only used after the introduction of BiLSTM.
\end{remark}

There is an issue, but not so serious. The objective of ELMo seams not a true joint distribution. If there is not any common parameter in the two log likelihoods, then optimize the objective is equivalent to optimize them seperatively. However there are shared parameters in two distributions. So it is not a joint distribution any longer, instead it is just a loss function for a statistical desicion. If it is normalized (becoming a true distribution.), then we speculate that it will perfume better then the unnormlized one, and call ELMo with such distribution the normalized ELMo.

\subsubsection{BERT}

\begin{figure}[h]
  \centering
  \includegraphics[scale=0.7]{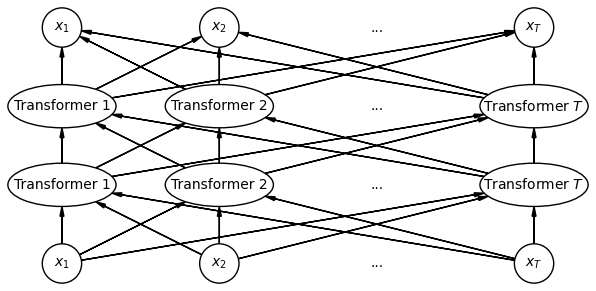}
  \caption{The structure of BERT (a whole layer is the local model of Transformer \autoref{df:transformer})}
  \label{fig:bert}
\end{figure}

BERT is just the distribution of the token sequence instead of introducing Markov structure of the sequence. Since BERT casts off the Markov chain of the states, it must take the hierachical form, otherwise, it will form a cycle as a bidirection model, that is illegal in the directed graph model.

In the next section, we will give the way to construct the Transformer.

\section{Statisitical Decision}

Many machine learning models are not described via an explicit distribution, such as Kmeans clustering and \autoref{df:ml}. In this case, we construct a model only with a loss function based on the sample or its corresponding optimization problem. The localization method is subsequently addressed.

\subsection{The definition of the statisitical decision}

\begin{definition}[Statisitical Decision\cite{shao2007}]
  Assume that $l(x,\theta)$ is the loss on the single point $x$ of parameter $\theta$. The loss/risk of the model is defined as the expectation $El(X,\theta)$, and replaced by the empirical risk 
  \begin{equation}
  J(\theta):= \frac{1}{N}\sum_i l(x_i,\theta) \comma
  \end{equation}
if the real distribution is unknown. The statisitical decision is just the following optimization problem,
  \begin{equation}\label{erm}
  \min_\theta J(\theta) \comma
  \end{equation}
  that is the principle of the \textit{empirical risk minimization (ERM)}.
\end{definition}

We use $l(x,\theta)$ to denote a statisitical decision.

Obviously, the MLE is a special case of statisitical decision or ERM whose loss is the negative likelihood. And it is not difficult to define statisitical decision for statistical learning models. For example, the statisitical decision for regression models could defined as the following optimization problem,
\begin{equation}\label{dec-reg}
\min_{\theta} \big\{J(\theta):=\frac{1}{N}\sum_i l(x_i,y_i,\theta)\big\} \comma
\end{equation}
where $\{(x_i,y_i)\}$ is the sample. \eqref{dec-reg} can be created by applying the conditionalizing constructor $\C$ to the statisitical decision $l$, deonted as $\C(l)$.

Another example is that the statisitical decision for latent variable model could defined as the following optimization problem, as a generalization of MoMLE,
\begin{equation}\label{dec-lvm}
\min_{\theta,\{z_i\}} \big\{J(\theta,\{z_i\}):=\frac{1}{N}\sum_i l(x_i,z_i,\theta)\big\} \comma
\end{equation}
where $\{x_i\}$ is the sample and $z_i$ is unobservable. \eqref{dec-lvm} can be constructed as $\LV_z(l)$.

\subsection{Decision Tree}

People think that decision trees are not common statistical learning models. However, after carefully examining decision tree algorithms such as ID3, a decision tree is a recursive decision based on the following parametric model, 
\begin{equation}\label{id3}
  P(y|x,j)=P(y|x_j), j=1,\cdots, p
\end{equation}
where $X_j$ is the $j$-th component of $X$. As known, the MLE of \eqref{id3} is equivalent to the principle of maximum information gain, which determines the splitting rule of the tree. We create a construtor $\Tr(J)$ to depict a decision tree based on any statisitical decision $J(\theta)$ including the MLE like \eqref{id3}. For example, the linear tree \cite{gama1999} is represented by $\Tr(P(y|x))$ where $P(y|x)$ is a linear classifier.

% We can extend the decision tree as follows:
% \begin{itemize}
%   \item use various models/loss functions for each splitting.
%   \item use a soft splitting rule which allocates samples to different subtrees with certain probabilities or weights, instead by a hard one.
% \end{itemize}

We denote the decision tree as $\Tr(M)$, where $M$ is a model to determine the splitting rules.

\subsection{Local statistical decision}\label{sec:local}

It is more natural to describe the localizaiton method from the view of statistical decision.

\begin{definition}[Local statistical decision/Local model]\label{df:local-sd}
  Given a loss function $l$ and a kernel $K(x,y)$, we say the following optimization problem is a local statistical decision at the target point $x_*$,
  \begin{equation}\label{local-sd}
  \min_\theta \frac{1}{N}\sum_i K(x_*,x_i)l(x_i,\theta)
  \end{equation}
  where $l(x,\theta)$ is a loss at the single point $x$, $\{x_i\}$ is the sample. We would like to say that \eqref{local-sd} is built through appling the localization trick to the $l$.
\end{definition}

the kernel $K$ in local model is more free then that in the kernel method. In most cases, it is only expected to be non-negative.

since it is not required to be symmetric, A kernel $K(x,x')$ can be represented by two different feature mappings $\phi,\psi: \mathcal{X}\to H$ where $H$ is a Hilbert space, namely 
\begin{equation*}
  K(x,x')=\langle\phi(x),\psi(x')\rangle
\end{equation*}
When $H$ is a low dimensional Euclidean space, the feature mappings are also called the embedding or the distributional/real representation. More generally, we define 
$$K(x,x')=F(\phi(x),\psi(x'))$$
such as $K(x,x')=e^{\langle\phi(x),\psi(x')\rangle}$.

The solution of \eqref{local-sd} is related to $x_*$, thus denoted as $\hat{\theta}(x_*)$.

The total loss on the sample is 
\begin{equation}
  J(K)=\sum_i l(x_i,\hat{\theta}(x_i;K))
\end{equation} 
where the expressions $J(K)$ and $\hat{\theta}(x_i;K)$ stress that the loss is also related to the kernel $K$.

Following definition reflects the original idea of localization.

\begin{definition}[Local model for machine learning]\label{df:local-ml}
  Given a machine learing model $y\sim f(x,\theta)$, we define its localized model as
  \begin{equation}
  \min_\theta \frac{1}{N}\sum_i K(x_*,x_i)|y_i-f(x_i,\theta)|^2
  \end{equation}
  or for some purposes,
  \begin{equation}\label{local-ml}
  \min_\theta \frac{1}{N}\sum_i K(x_*,y_0,x_i,y_i)|y_i-f(x_i,\theta)|^2
  \end{equation}
\end{definition}

Finnally, we introduce $Loc(M)$ (or $Loc(l)$) to describe a local model whose fundamental model is $M$ or (determined by a loss function $l$).

\subsection{Local average}

The terminal goal of the regression is to calculate the conditional expection, 
\begin{equation}\label{local-average}
  E(y|x)\approx \sum_{x_i\in U_x} y_i
\end{equation} 
where $U_x$ is a certain neighorhood of $x$.

On the basis of \eqref{local-average}, we give the following concept.

\begin{definition}[Local average/Local regression]
A local average of the sample $\{(x_i,y_i)\}$ on target var $x_*$, is defined as,
  \begin{equation}
    \hat y(x_*):=\sum_i K(x_*,x_i)y_i/\sum_i K(x_*,x_i) \comma
  \end{equation}
where the sample space $\mathcal{Y}$ of $y$ is assumed to be the real numbers $\mathbb{R}$ or the Euclidean space $\mathbb{R}^p$.
\end{definition}

\begin{fact} 
  Any local regression is reduced to the local average, approximately.
\end{fact}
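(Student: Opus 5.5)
The plan is to start from the local model of \autoref{df:local-ml}, the localization of a regression model $y\sim f(x,\theta)$, and show that its prediction at the target point $x_*$ is, to leading order, a local average. Fix $x_*$ and write $\hat\theta(x_*)$ for the solution of
\begin{equation*}
\min_\theta \frac{1}{N}\sum_i K(x_*,x_i)|y_i-f(x_i,\theta)|^2 ,
\end{equation*}
so that the local regression predicts $\hat y(x_*)=f(x_*,\hat\theta(x_*))$. The argument will be perturbative: as a function of $x$ near $x_*$, the model $f$ is approximated by its zeroth-order Taylor term, and the kernel is assumed to concentrate its weight on a neighbourhood $U_{x_*}$ in which this approximation is accurate.

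\textbf{Step 1 (exact case).} First I treat the locally constant model $f(x,\theta)=\theta$. The objective is then a weighted least-squares problem. Setting its derivative to zero gives
\begin{equation*}
\sum_i K(x_*,x_i)(y_i-\theta)=0 ,
\end{equation*}
whose solution is $\hat\theta(x_*)=\sum_i K(x_*,x_i)y_i/\sum_i K(x_*,x_i)$. This is exactly the local average of the preceding definition, so in this case the statement holds with equality. It needs $K\ge 0$ and $\sum_i K(x_*,x_i)>0$, which I will assume explicitly.

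\textbf{Step 2 (general case).} For a general smooth $f$, I expand $f(x_i,\theta)=f(x_*,\theta)+O(|x_i-x_*|)$. Suppose the kernel has bandwidth $h$, i.e.\ it is negligible for $|x_i-x_*|>h$. Then the localized objective equals the objective of Step 1 in the new variable $c=f(x_*,\theta)$, up to an error of order $h$. Hence
\begin{equation*}
f(x_*,\hat\theta(x_*))=\frac{\sum_i K(x_*,x_i)y_i}{\sum_i K(x_*,x_i)}+O(h).
\end{equation*}
This covers the loss in \autoref{df:local-ml}. The generalized kernel $K(x_*,y_0,x_i,y_i)$ in \eqref{local-ml} is handled identically: it only changes the weights, which still produce a weighted average.

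\textbf{Step 3 (other losses).} For a general loss $l$, I replace the squared error by $l$ and linearize the first-order condition around the minimizer. This gives an estimating equation
\begin{equation*}
\sum_i K(x_*,x_i)\,w_i\,(y_i-c)\approx 0, \qquad w_i=l''\text{-type weights},
\end{equation*}
which is again a local average with modified nonnegative weights $K(x_*,x_i)w_i$. I expect this to be the main obstacle. One must justify that the linearization is valid, which requires convexity and smoothness of $l$, and that the weights $w_i$ stay nonnegative. I would first try to restrict to strictly convex, twice differentiable losses, and state the fact as holding ``approximately'' in exactly this sense.
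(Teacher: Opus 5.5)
The paper gives no proof of this Fact. It is one of its informal ``facts''. The only support the paper offers is the heuristic \eqref{local-average} and the locally constant computation \eqref{local-ae}. Your Step 1 is the same computation and is correct.

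\textbf{The gap is in Step 2.} The remainder in $f(x_i,\theta)=f(x_*,\theta)+O(|x_i-x_*|)$ carries the constant $L_\theta=\sup_x\|\partial_x f(x,\theta)\|$, which depends on $\theta$. Smoothness of $f$ gives no bound uniform in $\theta$. Since $\hat\theta(x_*)$ is chosen by the data, the fitted model can have slope of order $1/h$.

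Here is a counterexample. Take local linear regression $f(x,\theta)=a+b(x-x_*)$ with $x_*=0$ and the box kernel of radius $h$, and let the only sample points in the window be $(-h/2,0)$ and $(h/10,1)$. The fit interpolates them with slope $5/(3h)$, so $\hat y(0)=5/6$. The local average is $1/2$, so the discrepancy is $1/3$ for every $h$.

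There is a second, independent problem with the inference. Even with a uniform slope bound, two objectives that agree to within $\delta$ only give minimizers within $O(\sqrt{\delta})$, because the Step 1 objective grows quadratically about its minimizer $c^*$. You also need $f(x_*,\cdot)$ to be able to reach $c^*$. So ``the objectives agree to $O(h)$, hence the predictions agree to $O(h)$'' does not follow.

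\textbf{The repair is to argue through stationarity instead of comparing objective values.} Assume the model has an intercept, i.e.\ the family $\{f(\cdot,\theta)\}$ is closed under adding constants. Then the first-order condition in that direction at $\hat\theta$ gives the exact identity
\begin{equation*}
f(x_*,\hat\theta)=\frac{\sum_i K(x_*,x_i)\,y_i}{\sum_i K(x_*,x_i)}-\frac{\sum_i K(x_*,x_i)\bigl(f(x_i,\hat\theta)-f(x_*,\hat\theta)\bigr)}{\sum_i K(x_*,x_i)} .
\end{equation*}
For local linear regression the correction is $\hat b(\bar x_w-x_*)$, where $\bar x_w$ is the kernel-weighted mean of the $x_i$. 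This is the familiar gap between the local linear and Nadaraya--Watson estimates. For $K\ge 0$ supported in radius $h$, the correction is at most $L_{\hat\theta}h$.

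Your $O(h)$ conclusion therefore holds once you add an explicit hypothesis bounding the variation of the \emph{fitted} model on the window. Two sufficient choices are a uniform bound on $\partial_x f$ over $\Theta$, or noise-free data from a smooth regression function with a non-degenerate design.

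Step 3 inherits the same requirement for non-constant $f$. In the locally constant case it needs no linearization. For a convex, twice differentiable loss $l(y-c)$ with $l'(0)=0$, the mean value theorem gives $l'(y_i-c)=l''(\xi_i)(y_i-c)$ with $l''(\xi_i)\ge 0$, so $\hat c$ is exactly a weighted average. These weights depend on the responses, however, so the result is a local average only in the generalized-kernel sense of \eqref{local-ml}.
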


Especially, we have the local average of the sample $\{(x_i,x_i)\}$, also named the self local average of $\{x_i\}$.

\begin{definition}
  The \textit{(self) local average mapping/transform} (or \textit{mean shifting}) is defined as,
  \begin{equation}\label{meanshift}
    m(x_*):=\sum_i K(x_*,x_i)x_i/\sum_i K(x_*,x_i):\mathcal{X}\to\mathcal{X}.
  \end{equation}
  Mapping on the sample $X=\{x_i\}$, we have
  \begin{gather}\label{meanshift2}
    m(x_i):=\sum_j K(x_i,x_j)x_j/\sum_j K(x_i,x_j):\mathcal{X}^N\to\mathcal{X}^N \comma \\
    \text{i.e.} ~ m(X)=\tilde{K}X \comma
  \end{gather}
  where $\tilde{K}$ is the is the normlization of the kernel matrix $K(X,X)$.
\end{definition}

We can call \eqref{meanshift} ``the self-local average'', partly in the reason that it has the same form with ``the query-key-value model'' of the self-attention \cite{vaswani2017}.

\begin{remark}
  The famous MeanShift algorithm \cite{comaniciu2002} is the iteration of the mapping $m$.
\end{remark}

\begin{remark}
  The local linear embedding (LLE)\cite{roweis2000} is indeed a self-local average with learnable kernel $K$.
\end{remark}

For classification, we have the following analogy.

\begin{definition}[Local mode/Local classification]
A local mode of the sample $\{(x_i,y_i)\}$ on target var $x_*$, is defined as,
  \begin{equation}
    \hat y(x_*):=\arg\max_k\sum_{y_i=k} K(x_*,x_i) \comma
  \end{equation}
where the sample space $\mathcal{Y}$ of $y$ is a finite set, taking the values $k=1,\cdots,K$.
\end{definition}

A classisification model can be transformed to a regression model, by embedding the discrete space $\mathcal{Y}$ into a continuous space, as in word embedding.

\subsection{Local latent variable model}

It is possible to apply the localization trick into the latent variable models. As we all know, the common method to solve the latent variable model is based on the variational lower bound, instead of the likelihood function, as the loss function of the model. Hence it is able to define a ``local variational lower bound''.

Here, we consider the autoencoder as a special latent variable model. We realize that the autoencoder should also be linear in the local domain, and the most typical linear latent variable model/autoencoder is PCA. Therefore, we can directly establish a ``local PCA'' model, denoted as $\Loc(PCA)$.

As we all know, the optimization of the latent variable model is the variational lower bound, which is to replace the likelihood function with the variational lower bound as the loss function. We can define the ``local variational lower bound''. Here, we consider the autoencoder as a special latent variable model. We realize that the autoencoder should also be linear in the local range, and the most typical linear autoencoder is PCA.

Obviously, the autoencoder can also be constant locally, and defined as the following optimization problem.
\begin{equation}\label{local-ae}
  \min_{z}\sum_i K(x_*,x_i)\|x_i-z\|_2^2
\end{equation} 
It is easy to see that the solution to \eqref{local-ae} is $\hat{z}(x_*)=\sum_i K(x_*,x_i)x_i/\sum_i K(x_*,x_i)$, that is the local average. This implies the following fact.

\begin{fact}
The local dimensionality reduction, the local clustering, and the local average are equivalent concepts in some sense.
\end{fact}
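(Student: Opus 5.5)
The plan is to show that all three notions are solutions of one weighted least-squares problem, namely the local constant autoencoder \eqref{local-ae}. They differ only in how the latent variable $z$ is read.

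\textbf{Step 1: local average.} I will start from the computation just made. The objective $\sum_i K(x_*,x_i)\|x_i-z\|_2^2$ is a strictly convex quadratic in $z$ whenever $\sum_i K(x_*,x_i)>0$, which holds since $K$ is non-negative and not identically zero on the sample. Setting the gradient to zero gives $\hat z(x_*)=m(x_*)$, the self local average \eqref{meanshift}. So the local average \emph{is} the local constant autoencoder. That autoencoder is a dimension reduction to a zero-dimensional affine subspace: a point, i.e.\ the $0$-component (``constant'') version of $\Loc(PCA)$. More generally, I will observe that $\Loc(PCA)$ with $k$ components has, as its optimal affine subspace, the one through the weighted mean $m(x_*)$. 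Hence the local average appears as the $k=0$ case of local dimension reduction, and as the centering step of every $k$.

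\textbf{Step 2: local clustering.} I will write K-means as the statistical decision $\LV_z(l)$ with $l(x,z,\theta)=\|x-\mu_z\|^2$, where $z$ is a finite label. Localizing it with the kernel gives $\min_{\mu,\{z_i\}}\sum_i K(x_*,x_i)\|x_i-\mu_{z_i}\|^2$. Once the labels are fixed, each center is the kernel-weighted mean of its cluster, which is again a local average. With a single cluster, the local clustering problem is literally \eqref{local-ae}.

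In the other direction, I will iterate the map $m$ starting from $x_*$ (the MeanShift remark). Points whose iterates converge to the same fixed point of $m$ form one cluster, and that fixed point is its center. This turns the local average into a clustering procedure. So the three problems reduce to each other: local clustering with one cluster is local dimension reduction of dimension $0$, and both are the local average, whose iteration gives back a clustering.

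\textbf{Main obstacle.} The difficulty is making ``equivalent in some sense'' precise. For $k\ge1$, local PCA carries more information than the local average, and clustering with several centers is not one quadratic problem. I would therefore state the equivalence at the level of constructors: $\Loc$ applied to the family $\{$PCA with $k$ components$\}\cup\{$K-means with $c$ clusters$\}$ shares the base case $k=0$, $c=1$, and every member is built on the local average. Rigorously, I would claim only the following. The minimizers coincide in the base case. In general, each method's solution is obtained from $m$ by a further step independent of the localization, namely an eigen-decomposition of the weighted covariance for PCA, or a partition into MeanShift basins for clustering.
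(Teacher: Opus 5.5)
Your proposal follows the paper's own argument: the paper supports this fact only by observing that the locally constant autoencoder \eqref{local-ae} is minimized by the local average $m(x_*)$, which is your Step~1 together with the one-cluster case of Step~2 (the paper leaves the link to clustering implicit, and you make it explicit). Your extra material goes beyond the paper, and one part of your ``rigorous'' claim should be dropped: for $c\ge 2$ the minimizer of your localized K-means objective is in general not a partition into MeanShift basins, which come from iterating $m$ over the whole sample (a different procedure), so what you actually establish is the base-case coincidence plus the fact that weighted PCA is centered at $m(x_*)$.
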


\subsection{Local statistical decision for autoregression}

Now we delve into the local statistical decision for the autoregression or discrete dynamics (of 1-order), that is represented as follows:
\begin{equation}\label{ar}
  x_{t+1}\sim f(x_t,t), x_t\in\mathcal{X},t=1,2,\cdots
\end{equation}
where $\mathcal{X}$ is assumed to be Euclidean space $\mathbb{R}^p$, otherwise we should embed it into the real space first. It is indeed a stochatic process, but in a supervised learning form (\autoref{df:ml}).

According to the \autoref{df:local-sd} or \autoref{df:local-ml}, the local model of \eqref{ar} could be expressed as the following local average,
\begin{equation}\label{local-ar}
  \hat x_{t+1}:= \sum_s K(x_t,t,x_{s},s)x_{s+1}/\sum_s K(x_t,t,x_s,s)
\end{equation}
treating the tuple $(x_{t},t)$ as the input and $x_{t+1}$ as the outputs. \eqref{local-ar} could be written as $\Loc(x_{t+1}\sim f(x_t,t))$ roughly. Let us rewrite it in the following definition.

\begin{definition}[Local autoregression/Dynamical system/Local time-series]
  The local model of \eqref{ar} is called the local autoregression/time-series, expressed as
  \begin{equation}\label{serialized-self-local-average}
  \hat x_{t} = m(x_{t},t):= \sum_s K(x_t,t,x_s,s)x_{s}/\sum_s K(x_t,t,x_s,s)
  \end{equation}
  where the time $t$ could theoretically take any type of values, and $x_t$ is the series on it. We prefer using the sequential form: $\{\hat x_{t}\}=m(\{x_{t}\})$ that is a mapping from the sequential space $\mathcal{X}^*$ to itself.
\end{definition}

\begin{fact}
 The local average \eqref{serialized-self-local-average} can approximate any autoregression model or discrete dynamics (of any order). In fact, it is the result of the construction of local HMM, write, $\Loc(HMM)$.
\end{fact}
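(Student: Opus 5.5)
The plan is to prove the two claims of the fact separately: first the approximation property, then the identification with $\Loc(HMM)$.

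\textbf{Approximation.} I treat a general autoregression of order $k$,
\begin{equation*}
x_{t+1} = f(x_t,\dots,x_{t-k+1},t)+\epsilon_t ,
\end{equation*}
and reduce it to a first order system on a state space. Set $s_t := (x_t,\dots,x_{t-k+1})\in\mathcal{X}^k$. Then \eqref{ar} becomes $s_{t+1}\sim F(s_t,t)$, where $F$ is the shift map with $f$ in its first component. This turns the dynamics into the input--output form used in \eqref{local-ar}, with $(s_t,t)$ as the input and $s_{t+1}$ as the output.

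Next I choose the kernel $K(x_t,t,x_s,s)$ as an approximate identity in the lagged states. A concrete choice is a Gaussian bump $K_h(x_t,t,x_s,s)=\exp(-\|s_t - s_{s}\|^2/h)\,\mathbf{1}[s< t]$. The factor $s_t - s_s$ depends only on a finite window of the sequence, and the serialized form \eqref{serialized-self-local-average} allows the kernel to depend on the times $t,s$. With a shift in the output index, as in \eqref{local-ar}, the local average $\hat x_{t+1}$ becomes a Nadaraya--Watson estimator of $E(x_{t+1}\mid s_t)$.

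The step I expect to be the main obstacle is the standard consistency argument for kernel regression. I would assume that $f$ is continuous, that the series is stationary and ergodic, and that $h\to 0$ with $Nh^{pk}\to\infty$. Under these assumptions the estimator converges to $f$ uniformly on compact sets. This makes precise the statement in the paper that any local regression reduces to a local average approximately. If the dynamics are deterministic, the noise term vanishes and the estimator converges to $f$ itself.

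\textbf{Identification with $\Loc(HMM)$.} I start from the HMM $P(x^*,z^*)$ and take the hidden state to be $z_t=s_t$. This is a Markov chain, so we are in the setting $\TS_z$, with emission $x_t=\pi_1(z_t)$. I then apply the localization trick of \autoref{df:local-sd} to the transition loss $\|z_{t+1}-F(z_t)\|^2$, replacing $F$ by a locally constant model, as in \eqref{local-ae}. The local solution is exactly the local average above. This gives $\Loc(HMM)$ equal to \eqref{serialized-self-local-average}, in the informal sense in which the paper uses its constructors.
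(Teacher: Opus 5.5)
There is a genuine gap in how you handle order $k\ge 2$, and it carries over to your $\Loc(HMM)$ step.

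\textbf{The kernel is not of the stated form.} In \eqref{serialized-self-local-average} the kernel is a function of $(x_t,t,x_s,s)$ only, and the averaged values are the $x_s$ themselves. Your kernel $\exp(-\|s_t-s_s\|^2/h)$ compares the windows $(x_t,\dots,x_{t-k+1})$ and $(x_s,\dots,x_{s-k+1})$, which are not arguments of $K$. Letting $K$ depend on the times $t,s$ does not help. Two sequences that agree at positions $t$ and $s$ but differ at $t-1$ get the same kernel value from any fixed function of $(x_t,t,x_s,s)$. The only way to read the window off $t$ is to let $K$ depend on the realized sequence, and then it is no longer a fixed kernel of the stated form. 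Your consistency step also treats $\hat x_{t+1}$ as a function of an arbitrary query state in $\mathcal{X}^k$, which presupposes a fixed kernel on windows.

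\textbf{The values are shifted.} You take the values $x_{s+1}$ from \eqref{local-ar}, whereas \eqref{serialized-self-local-average} averages $x_s$ against the key $(x_s,s)$. So what you prove is Nadaraya--Watson consistency for a different estimator. The reduction of ``any order'' to the stated local average is assumed, not shown. The same happens in your last step. With $z_t=s_t$ the ``hidden'' state is an observed function of the past, so localizing the transition loss gives a local average over windows. That coincides with \eqref{serialized-self-local-average} on $x$ only when $k=1$, and even then only up to the shift.

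\textbf{How to repair it.} The paper offers no argument beyond deriving \eqref{local-ar} from \autoref{df:local-ml} with $(x_t,t)$ as input and calling \eqref{serialized-self-local-average} a rewriting. So this lifting is exactly what you have to supply. A clean fix is to apply the local average to the lifted sequence $w_t=(x_t,x_{t-1},\dots,x_{t-k})\in\mathcal{X}^{k+1}$ with kernel $K(w_t,t,w_s,s)=\exp(-\|(x_t,\dots,x_{t-k+1})-(x_{s-1},\dots,x_{s-k})\|^2/h)\,\mathbf{1}[s\le t]$. This kernel genuinely has the required form. The first coordinate of the value $w_s$ is $x_s$, the successor of the window it is compared with. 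Hence the first component of $m(w_t)$ is exactly the Nadaraya--Watson estimate of $E(x_{t+1}\mid x_t,\dots,x_{t-k+1})$, and both the kernel problem and the shift problem disappear. The result must then be stated for sequences in $(\mathcal{X}^{k+1})^*$, with the window supplied by a latent state or a preceding layer. That fits the $\Loc(HMM)$ reading better than a single average on raw $x_t$.

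\textbf{Your consistency hypotheses are too weak.} Uniform convergence on compact sets needs more than stationarity and ergodicity: for instance a mixing condition, plus a density of the windows bounded below on that compact set. For deterministic dynamics the invariant measure is typically singular, so you only get convergence to $f$ on its support.
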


It follows from \eqref{local-ml} that the local autoregression is a local regression of $\mathcal{T}\to \mathcal{X}$, write, $\Loc(x_t=f(t))$. We conclude that,
\begin{fact}
  All local models are equivalent in some sence, and they are all local average.
\end{fact}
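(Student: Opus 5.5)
The plan is to reduce every local model to the one-parameter-at-a-point form of \autoref{df:local-sd} and then show that the minimiser is always a kernel-weighted average. I will take ``equivalent in some sense'' to mean: any local model $\Loc(M)$ can be rewritten, with a suitably chosen kernel, as a local average $\hat y(x_*)=\sum_i K(x_*,x_i)y_i/\sum_i K(x_*,x_i)$, possibly after an embedding of the output space.

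\textbf{Step 1: the squared loss with a constant model.} For the local regression of \autoref{df:local-ml}, I will first treat $f(x,\theta)=\theta$, i.e.\ a model that is constant in a neighbourhood of $x_*$. The objective $\sum_i K(x_*,x_i)|y_i-\theta|^2$ is a weighted least-squares problem. Setting its gradient to zero gives exactly the local average. The computation already done for \eqref{local-ae} covers the self case $y_i=x_i$. The autoregressive case \eqref{local-ar} is the same computation with input $(x_t,t)$ and output $x_{t+1}$.

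\textbf{Step 2: other families.} I will handle the remaining families one at a time.
\begin{enumerate}[(a)]
\item \emph{Classification.} Embed the finite set $\mathcal{Y}$ as one-hot vectors in $\R^K$. The local average is then the vector of normalised kernel weights per class, and its $\arg\max$ is the local mode.
\item \emph{Latent variable and autoencoder models.} The locally constant autoencoder \eqref{local-ae} is again a local average, as already noted in the Fact above. $\Loc(PCA)$ degenerates to it when the local dimension is zero.
\item \emph{Time series.} By the remark after \eqref{serialized-self-local-average}, local autoregression is a local regression $\mathcal{T}\to\mathcal{X}$, so it falls under Step 1.
\item \emph{General losses $l$.} For a general loss $l(x,\theta)$ I will take a second-order expansion of $l$ around the minimiser. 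Near its optimum $l$ behaves like a (Mahalanobis) squared loss, so the weighted problem \eqref{local-sd} is approximately a weighted least-squares problem, and its solution is again a kernel-weighted mean.
\end{enumerate}

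\textbf{Step 3: non-constant local models.} For a local model $f(x,\theta)$ that is not constant (e.g.\ local linear), I will absorb the extra structure into the kernel. Write $\hat\theta(x_*)$ as a linear functional of $\{y_i\}$. Weighted least squares gives $\hat y(x_*)=\sum_i w_i(x_*)y_i$ with $\sum_i w_i=1$. Setting $\tilde K(x_*,x_i):=w_i(x_*)$ turns this into a local average, and this step invokes the preceding Fact that any local regression reduces to a local average.

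\textbf{Main obstacle.} The hard step is (d) together with Step 3: the equivalent weights $w_i$ may be negative, while the paper only expects $K\ge 0$, and for a non-quadratic loss the reduction is only approximate. I would first try to settle this by allowing signed kernels, which the text permits via $K=\langle\phi,\psi\rangle$. The word ``approximately'' would then be taken to mean equality up to the second-order expansion.
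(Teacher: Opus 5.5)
Your core reduction is the paper's own argument. The paper justifies this Fact only by noting, via \eqref{local-ml}, that local autoregression is the local regression $\Loc(x_t=f(t))$ of $\mathcal{T}\to\mathcal{X}$. It then leans on the preceding informal facts that local regression and the locally constant autoencoder \eqref{local-ae} reduce to a local average. That is your Step 1 together with (b) and (c). In (c) you should make explicit that the kernel $K(x_t,t,x_s,s)$ depends on the outputs $x_s$ and not only on the inputs $s$. So Step 1 must be run with the output-dependent kernel $K(x_*,y_0,x_i,y_i)$ of \eqref{local-ml}. The weighted least-squares computation is unchanged because these weights do not depend on $\theta$, and this is exactly the observation the paper relies on.

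Where you go beyond the paper, two steps fail as written. In (d), expand each $l(x_i,\cdot)$ to second order about its own minimiser $\theta_i^*$, with Hessian $H_i$. This gives
\begin{equation*}
\hat\theta(x_*)\approx\Bigl(\sum_i K(x_*,x_i)H_i\Bigr)^{-1}\sum_i K(x_*,x_i)H_i\,\theta_i^* ,
\end{equation*}
which is a matrix-weighted average of per-sample minimisers, not a kernel average of the data. It becomes one only when $H_i=h_iH$, and then with the altered weights $h_iK(x_*,x_i)$. There are further problems with (d):
\begin{enumerate}[(i)]
\item $\theta_i^*$ often does not exist or is not unique. The squared loss $|y_i-f(x_i,\theta)|^2$ with $\dim\theta>1$ has a whole set of single-point minimisers and a singular $H_i$. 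A Gaussian likelihood with unknown variance has no single-point minimiser at all.
\item The quadratic model is accurate only if all the $\theta_i^*$ lie near $\hat\theta$, which localising in $x$ does not guarantee.
\item Expanding about $\hat\theta$ instead just returns $\hat\theta$.
\end{enumerate}

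In Step 3, three things go wrong:
\begin{enumerate}[(i)]
\item $\hat\theta(x_*)$ is a linear functional of $\{y_i\}$ only when $f$ is linear in $\theta$, which excludes the neural $f$ the paper has in mind.
\item $\sum_i w_i=1$ requires the model to reproduce constants.
\item $w_i(x_*)$ depends on the whole sample through the inverse weighted Gram matrix, not on $(x_*,x_i)$ alone.
\end{enumerate}
For nonlinear $f$ you are back to citing the informal Fact exactly as the paper does.

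Since the paper only claims the general case ``in some sense'', you have two options. Either drop (d) and the nonlinear part of Step 3, or state their conclusions in the weaker form they actually support, namely Hessian-weighted averages and sample-dependent, possibly signed equivalent kernels, instead of asserting a local average.
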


When the kernel is unrelated to the positions, i.e., $K$ is designed as $K(x_t,x_s)$, it is reduced to ordinary self-local average\cite{lee2019}. Otherwise, \eqref{serialized-self-local-average} could be termed as ``serialized self-local average'', providing a strict mathematical interpretation for self-attention.

A simple design of $K$ is $K_1(x_t,x_s)K_2(t,s)$ or $K_1(x_t,x_s)K_2(t-s)$, where $K_1$ represent the graphic-dependence of elements in $\mathcal{X}$ statically and $K_2$ represents the ``position-encoding''. It is essentially the weighted moving average.

Hence, what the localization really dose is to transform the time-dependency to graphic-dependency that is no limited on long-dependency theoretically.

\begin{definition}[self attention with absolute positional embedding\cite{vaswani2017}]
  The self attention is a local dynamical system with kernel $K(x_t+p(t),x_s+p(s))$ where $p(t)$ is the position-encoding of $t$.
\end{definition}

Strictly, the local dynamical system should learn $K$ to implement the self-attention, that is to solve the following optimization problem: 
\begin{equation}
  \min_K J(K):=\|X-\tilde{K}X\|^2_{F}
\end{equation} 
where $X=\{x_{t}^{(j)}\}_{tj}$ and $\tilde{K}$ is the normlization of $K=\{K(x_t,t,x_s,s)\}_{st}$.

\subsection{Local-local model}

A local model is determined by a local loss $J(x_*,\theta;K)$. It is also a loss function at a single sample point. Based on this, we can further construct a local loss as follows:
\begin{align}\label{local-local}
J'(x_*,\theta;K,K') &= \sum_kK'(x_*,x_k)J(x_k,\theta;K) \nonumber\\
 & =\sum_{kj}K'(x_*,x_k)K(x_k,x_j)l(x_j,\theta)
\end{align}
In matrix form, it can be expressed as $J'(X,\theta)=K'(X,X)K(X,X)l(X,\theta)$.

We refer to \eqref{local-local} as the ``local-local loss'', and the corresponding decision model as the ``local-local model''.

\eqref{local-local} is equivalent to the local loss of the (composite) kernel $(x,y)\mapsto\sum_{k}K'(x,x_k)K(x_k,y)$. Therefore, the local-local model is still a local model. Similarly, the local-local average can be constructed, that is the local average of the composition of normalized kernels:
\begin{equation}\label{local-local-ave}
\hat{x}_i=\sum_{kj}{\tilde{K}'}_{ik}\tilde{K}_{kj}x_j, \text{i.e.}~ \hat{X}=\tilde{K}'KX
\end{equation}

To prevent a hierarchical local model from deteriorating into an ordinary local model, we insert a nonlinear function between two kernels, mimicking the feedforward neural network:
\begin{equation}\label{local-local-ave-f}
\hat{x}_i=\sum_{k}\tilde{K}'_{ik}f(\sum_j\tilde{K}_{kj}x_j),f:\R^p\to\R^p, \text{i.e.}~ \hat{X}=\tilde{K}'f(KX)
\end{equation}
where $\tilde{K}'$ is the normalization of $K(X',X')$, $X'=f(\tilde{K}X)$. It is more reasonable to rewrite \eqref{local-local-ave-f} as
\begin{equation}
\hat X = m_{K'} (f(m_{K} (X)))
\end{equation}
by the local average mapping.

Based on \eqref{local-local-ave-f}, we can further construct ``local-local-local models'' and so on. We collectively refer to these models as ``hierarchical local models''. We coin a symbol $\Loc^L$, for such construction, where $L$ is the number of the kernels, and $\Loc^1=\Loc$. In fact, Transformer, the most popular large model structure, is a such hierarchical local model.

\begin{definition}[Transformer]\label{df:transformer}
The encoder part of the Transformer can be represented as the following sequence mapping:
\begin{equation}
  \hat X = m_{K_L}(f_{L-1}( \cdots (f_1(m_{K_1} (X)))))
\end{equation}
where $m_{K_l}$ is the (sequential) local average of kernel $K_l$, and $f_l$ represents a feed-forward neural network. 
\end{definition}

\cite{vaswani2017} suggests $L=6$, thus Transformer $=\Loc^6(y_t\sim f(x_t,t))$. (Here, we have not yet introduced the multi-head structure.)

\autoref{df:transformer} only provides the structure of the Transformer encoder, and according to the suggestion in \eqref{local-ar}, the decoder part is redundant.

\begin{remark}
Note that, in \autoref{df:transformer}, the Transformer is defined to be one layer of transformer blocks in the architecture of BERT (\autoref{fig:bert}), instead of a single transformer block.
\end{remark}

\subsection{Tied models}

Inspired by ELMo as a BiLSTM, we propose a new model or constructor, called \textit{tied model}, denoted as the following pair,
\begin{equation}
  \mathrm{Tie}(P(X|\theta_1,\theta_0), P(X|\theta_2,\theta_0))
\end{equation} 
where $\theta_0$ is the common parameter and $\theta_i,i=1,2$ are the private parameters.

The estimation of parameter is to maximize ``the tied log-likelihood'',
\begin{equation}\label{tied}
\alpha\ln P(X|\theta_1,\theta_0)+(1-\alpha)\ln P(X|\theta_2,\theta_0)
\end{equation} 
where $0\leq \alpha\leq 1$ is a constant.

As mentioned above, \eqref{tied} is not a real likelihood, but a special loss function. The tied model is judged to be a product of experts \cite{hinton2002} but unnormalized and could be regarded as a statistical decision.

\subsection{Markov decesion process}

The \textit{Markov decesion process (MDP)} is a subcategory of time series $\mathcal{P}^*(\{x_t\})$. As a decistion model, it has the loss function in the following form,
\begin{equation}
  l(\theta;x_1,\cdots, x_T) = \sum_{t=1}^{T-1} l(\theta;x_{t+1}|x_t)
\end{equation}
where $l(\theta;x_{t+1}|x_t)$ represents a transition loss in the $t$-th step.

Reinforcement learning is indeed a special type of Markov decision process, where the policy governing actions from a given state serves as the parameter, and the conventional notion of loss is substituted with the concept of reward, while the true value of reward $l(\theta;x_{t+1}|x_t)$ is obtained through the interaction with environment.

We can create a latent variable model for MDP, as $\LV_{\{z_t\}}(MDP)$, and get the so-called partially observable MDP.

\section{Novel paradigms of machine learning}\label{sec:paradigms}

Numerous novel paradigms have been developed in the field of machine learning, including:
\begin{itemize}
  \item Reinforcement learning
  \item Ensemble Learning
  \item Transfer Learning
  \item Incremental Learning (Continual Learning, On-line Learning)
  \item Life-Long Learning (as a more general form of incremental learning)
  \item Meta Learning
  \item Multi-task Learning
  \item Dual Learning
\end{itemize}

We do not plan to discuss the paramdigms here. For the transfer Learning, several potential statistical definitions include:
\begin{itemize}
  \item $(P(X^{s}|\theta_1), P(X^{t}|\theta_2)), \theta_1,\theta_2\sim P(\theta|\alpha)$
  \item $(P(X^{s}|\theta_1,\theta_0), P(X^{t}|\theta_2,\theta_0))$
  \item $(P(\phi(X^{s})|\theta), P(\phi(X^{t})|\theta))$ where $\phi$ is learnable for making $\phi(X^{s})$ and $\phi(X^{t})$ have the same distribution.
\end{itemize}

Here, $X^{s}$ represents the source domain, and $X^{t}$ represents the target domain.

\section{Future Works}

This paper's fundamental objective is to employ category theory as a unified language to describe statistical learning models, and even to construct novel models using this framework. Currently, the number of statistical learning models described in this way is limited, but the author believes that other models newly developed, especially the LLMs, can also be described similarly. If successful, future efforts will be directed towards establishing a more systematic category-theoretical framework for statistical learning as well as the novel paradigms of machine learning listed in \autoref{sec:paradigms}. Guided by category theory, we will further invent new models.

It is worth mentioning that we have strictly defined the local decision-making model, also regarded as a constructor, and based on this, we have constructed the Transformer.

The facts stated in thet paper are not strict, as informal theorems, with the phrase ``in some sense''.

\bibliographystyle{plain}
\bibliography{references}
\end{document}